\documentclass{article}

\usepackage{tabularx}
\usepackage{amssymb}
\usepackage{amsmath}
\usepackage{amsthm}

\usepackage{arxiv}

\newtheorem{assumption}{Assumption}
\newtheorem{proposition}{Proposition}

\usepackage[utf8]{inputenc} % allow utf-8 input
\usepackage[T1]{fontenc}    % use 8-bit T1 fonts
\usepackage{hyperref}       % hyperlinks
\usepackage{url}            % simple URL typesetting
\usepackage{booktabs}       % professional-quality tables
\usepackage{amsfonts}       % blackboard math symbols
\usepackage{nicefrac}       % compact symbols for 1/2, etc.
\usepackage{microtype}      % microtypography
\usepackage{graphicx}
\graphicspath{ {./images/} }

\title{(Human) Attention Is (Still) All You Need: Human oversight makes AI-assisted social science reliable}

\author{
 Chen Zhu \\
  China Agricultural University \\
  \texttt{zhuchen@cau.edu.cn} \\
   \And
 Xiaolu Wang \\
  China Agricultural University \\
  \texttt{wangxiaolu77@cau.edu.cn} \\
   \And
 Weilong Zhang \\
  University of Cambridge \\
  \texttt{weilong.zhang@econ.cam.ac.uk} \\
}

\begin{document}
\maketitle
\begin{abstract}
Large language models (LLMs) are increasingly delegated tasks once reserved for trained researchers: generating hypotheses, choosing specifications, drafting conclusions. Whether this delegation produces trustworthy science is not solely a technical question about model capability; it also depends on how cognitive labour is structured between humans and machines. We study this problem by organising an AI-assisted research workflow around behavioural-science principles, pre-commitment, decision sequencing, accountability, and attention allocation, rather than by directly measuring human psychology at the gates. We propose that reliability in AI-assisted research is a property of \emph{decision architecture}: the placement, sequencing, and binding force of the choices that humans and AI components are each permitted to make. In a pre-specified $2\times4$ factorial experiment ($N=280$ complete research runs across four datasets), an unconstrained multi-agent baseline produced critical failures in 72\% of runs; the same underlying model and the same agent decomposition, with identical prompts on the reasoning agents shared by both arms, failed in 16\% once organised by three architectural commitments (LLMs restricted to reasoning, data and estimation executed deterministically, three human decision gates; Fisher's exact $p<0.001$). Reliability gains were largest on the least publicly represented dataset, a Qing-dynasty population register, in a pattern consistent with a task-based production model with Fr\'{e}chet-distributed output quality. An 80-run ablation suggests that deterministic computation and human gates contribute independently, with exploratory evidence consistent with complementarity. We interpret HLER as a research harness rather than an autonomous AI scientist. Its contribution is to reduce failures sharply, make residual weaknesses more visible, and prevent unreliable claims from being advanced as publication-ready outputs.
\end{abstract}

%==============================================================
\section{Introduction}

Large language models are increasingly delegated tasks once reserved for trained researchers: generating hypotheses, choosing specifications, writing code, drafting conclusions\cite{lu2026ai_scientist,wu_autogen_2023,hong_metagpt_2023}, a shift from task-level assistance towards workflow-level research automation\cite{tie2026autoresearch}. Whether this delegation produces trustworthy science depends not on what models can do alone, but on how cognitive labour is divided between humans and machines. The behavioural failure modes that motivated a decade-long credibility movement in empirical social science (specification search, motivated interpretation, fragile causal claims) arise when probabilistic judgement is permitted to govern stages of research that demand deterministic discipline\cite{osc_2015_reproducibility,brodeur_mass_2020,silberzahn2018many,nosek_preregistration_2018}. LLMs, by construction, are probabilistic reasoners. Letting them operate end-to-end across a research pipeline does not eliminate these failure modes; it amplifies them.

The amplification is structural, not incidental. A human researcher's cognitive bandwidth places an upper bound on the size of the search space they can exploit; LLMs lift that bound by orders of magnitude. A human can p-hack a handful of specifications, whereas an LLM agent can do so across thousands. A human occasionally misremembers a citation; LLMs hallucinate references as a baseline behaviour\cite{ji_hallucination_2023}. Independent evidence from large-scale human--LLM comparisons converges on a consistent picture: LLM outputs cluster more tightly than human outputs, underperform humans at the upper tail of creative-task distributions, and homogenise content when used as ideation aids\cite{wang_uzzi_2025_creativity,doshi_hauser_2024,padmakumar_he_2024,anderson_homogenisation_2024}. The behavioural-science literature on choice architecture has long argued that defaults, sequencing, and commitment devices, rather than exhortation, discipline biased decision-makers\cite{thaler_sunstein_nudge}; the team-science literature similarly shows that human--machine combinations underperform when task complexity and division of labour are not strategically structured\cite{vaccaro_almaatouq_malone_2024,marks_etal_2005_multiteam}. The same logic applies to AI participants in empirical research: the question is not whether the model is well-aligned, but whether the workflow within which it operates is.

We treat reliability in AI-assisted research, then, as a property of \emph{decision architecture}\cite{shneiderman_2020,amershi_guidelines_2019,mosqueira2023hitl}: the placement, sequencing, and binding force of the choices that humans and AI components are each permitted to make. We organise our argument around three commitments. First, LLMs handle reasoning-intensive, exploratory tasks (hypothesis generation, identification critique, interpretation) where their probabilistic flexibility is an asset. Second, data construction and statistical estimation execute through deterministic code, where reproducibility is non-negotiable and human discretion is itself a liability. Third, humans hold three decisive gates (research question, identification strategy, publication) where contextual judgement, scope conditions, and ultimate accountability for claims cannot be delegated. A task-based production model with Fr\'{e}chet-distributed output quality (Methods) yields a sharp prediction from these commitments: the reliability dividend from human gates should be largest precisely on tasks furthest from the LLM's training distribution, where probabilistic flexibility is least constrained by familiarity. Our experiment does not measure the micro-behaviour of human gatekeepers, for example, time spent, attention patterns, or disagreement among multiple PIs. Instead, it tests whether a workflow organised around behavioural-science principles can reduce failure when the same LLM is placed inside different decision architectures.

We test the prediction with a pre-specified $2\times4$ factorial experiment. We implement the architectural commitments in HLER (Human-in-the-Loop Economic Research)\cite{zhu2026hler}, a modular multi-agent system, and compare it against an unconstrained baseline that uses the same underlying language model, the same agent decomposition, and identical prompts for the reasoning agents shared by both arms, but allows the LLM to operate across all stages without human gates. Four datasets (UK Biobank, China Health and Nutrition Survey, China Health and Retirement Longitudinal Study, and the historical China Multi-Generational Panel Dataset-Liaoning panel) deliberately vary in public scientific visibility and data structure, from widely used contemporary health datasets to Qing-dynasty population registers. Because LLM model's training corpus is unobserved, we use PubMed literature prevalence as an exploratory proxy for dataset familiarity rather than as a direct measure of $\theta_t$. Each of 280 runs produces either a complete research output or, in the constrained arm, a documented human-gate decision to withhold approval when a critical defect is identified. Outputs and gate decisions are independently evaluated by three experts using a pre-specified rubric covering feasibility, identification credibility, and output consistency.

The unconstrained pipeline produced critical failures in 72\% of runs. The same pipeline, with the architectural commitments imposed, failed in 16\% (Fisher's exact $p<0.001$). The reliability gain was largest on the least publicly represented dataset, the Qing-dynasty CMGPD panel, in a pattern consistent with the model; an 80-run ablation suggests that deterministic computation and human gates contribute independently, with exploratory evidence consistent with complementarity. The same model, the same agent decomposition, and identical prompts on the shared reasoning agents: only the architecture surrounding them moves, and a fourfold reduction in failure follows. We therefore conceptualise HLER as a research harness for AI-assisted empirical science: a decision architecture that channels LLM-generated reasoning through deterministic computation, explicit human gates, and auditable research records. The implication is behavioural rather than purely technical. Productive human--AI collaboration in empirical science is a problem of decision design, where architecture reduces, exposes, and contains failure rather than eliminating it.

%==============================================================
\section{Model}

\subsection{The HLER architecture}

HLER decomposes the empirical research workflow into eight specialised agent roles (data audit, data profiling, question generation, data construction, identification assessment, econometric estimation, manuscript drafting, and review) coordinated by an Orchestrator that maintains a persistent RunState recording active dataset, variable definitions, candidate questions, model specifications, and generated artefacts (Figure~\ref{fig:architecture}). Each agent is responsible for one block of the workflow and performs every sub-task within it. Two design principles distinguish the framework from existing autonomous research systems such as AI Scientist\cite{lu2026ai_scientist}, AutoGen\cite{wu_autogen_2023}, MetaGPT\cite{hong_metagpt_2023}, and CAMEL\cite{li_camel_2023}, all of which permit LLMs to operate across all pipeline stages without structured human oversight. First, agents are partitioned by operator type: \emph{deterministic} agents (data construction, estimation) execute reproducible code and emit the R scripts they used; \emph{probabilistic} agents (hypothesis generation, identification critique, interpretation) call the LLM. Second, three explicit human decision gates (research question selection, identification review, and publication decision) bind the workflow to commitments made before downstream results are visible.

\begin{figure}[t]
\centering
\includegraphics[width=0.95\linewidth]{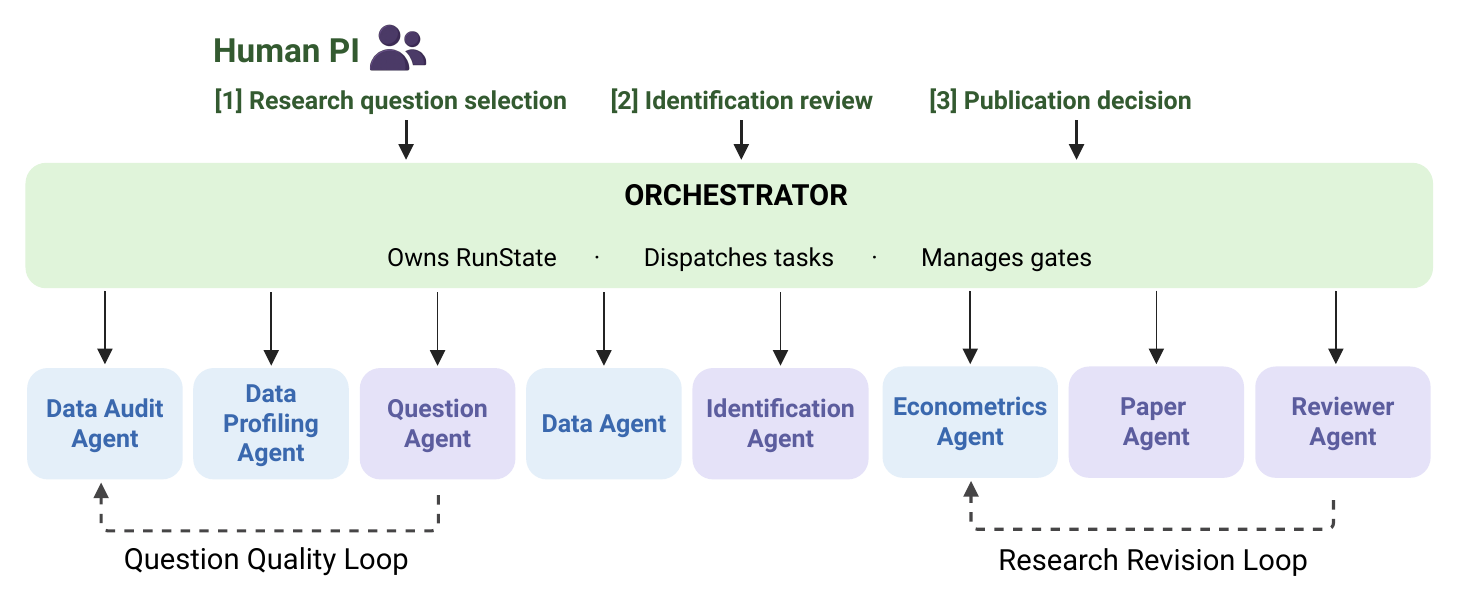}
\caption{\textbf{HLER decision architecture.} Specialised agents decompose the empirical research workflow. Purple-shaded agents are probabilistic (LLM-based reasoning); blue-shaded agents are deterministic (executable code). Human decision gates intervene at research-question selection, identification-strategy review, and publication decisions. The Orchestrator maintains the RunState, enforcing cross-stage consistency and producing an audit trail.}
\label{fig:architecture}
\end{figure}

\subsection{A task-based model of LLM research production}

The architecture above poses a single question: \emph{how should the researcher's limited time and attention be split between block-specific human gates and general oversight of the workflow as a whole?} We model this allocation problem and derive the optimal share in closed form. The model adapts the task-based production framework of Acemoglu and Restrepo\cite{acemoglu_restrepo_2018,acemoglu_restrepo_2022}, in which output is produced by allocating tasks across agents (humans, machines) of differing comparative advantage, and combines it with a Fr\'{e}chet productivity assignment of the kind used in trade and labour-allocation models\cite{eaton_kortum_2002,hsieh_hurst_jones_klenow_2019} to capture the stochastic, sample-then-select character of LLM output.

\paragraph*{Setup.}
Let $T=\{1,\ldots,m\}$ denote a finite set of research task-blocks. Each block $t$ is the responsibility of a single specialised LLM agent, which performs every sub-task in the block (data processing, identification, robustness, interpretation, and so on); ``block'' and ``agent'' are therefore in one-to-one correspondence. Working within block $t$, the agent generates $n_t$ candidate outputs and selects the best. We model the quality of each candidate as a Fr\'{e}chet draw,
\begin{equation}
q(o_{i,t})\;\overset{iid}{\sim}\;\mathrm{Fr\acute{e}chet}(\chi,\theta_t),
\label{eq:frechet}
\end{equation}
with common shape $\chi>0$ and task-specific scale $\theta_t>0$. The shape $\chi$ plays the role of \emph{effective temperature}: low $\chi$ produces a heavy-tailed quality distribution (exploratory, high-variance sampling, with a few candidates much better than the rest); high $\chi$ produces tightly clustered, low-variance outputs. The scale $\theta_t$ captures \emph{task--training-distribution proximity}: blocks whose tasks are well-represented in training have large $\theta_t$, while out-of-distribution blocks have small $\theta_t$. The candidate count $n_t$ formalises the sample-then-select strategy: by Fr\'{e}chet max-stability, the expected best-of-$n_t$ quality scales as $n_t^{1/\chi}$. The resulting gross block output $\widetilde{Q}_t$ thus depends on $q$, $n_t$, and $\chi$ but not on the oversight allocation; the symmetric-team assumption it relies on and the full derivation are deferred to Appendix~A.

\paragraph*{The allocation problem.}
We normalise the researcher's total oversight effort to one unit. For each block (equivalently, for the agent assigned to that block) the researcher chooses the share $\lambda_t\in[0,1]$ to spend on a block-specific \emph{human gate}; the remaining $1-\lambda_t$ is spent on \emph{general oversight} of the wider workflow. The two routes have distinct productivities, $\psi_A>0$ at the gate and $\psi_Z>0$ in general oversight, so that
\begin{equation}
a_t \;=\; \psi_A\,\lambda_t,
\qquad
\zeta_t \;=\; \psi_Z\,(1-\lambda_t).
\label{eq:budget}
\end{equation}
General oversight $\zeta_t$ raises overall quality multiplicatively, while gate investment $a_t$ reduces a coordination burden $\Gamma$ that grows with candidate count and specificity:
\begin{equation}
Q_t \;=\; \exp(\zeta_t)\,\widetilde{Q}_t\,\exp\!\big\{-\Gamma(n_t,\chi,a_t)\big\},
\qquad
\Gamma(n_t,\chi,a_t) \;=\; \chi\,n_t\,e^{-a_t}.
\label{eq:Qt}
\end{equation}
Eq.~\eqref{eq:Qt} adopts the simplest specification consistent with the qualitative claims: a unit scale on $\Gamma$, linear dependence on $n_t$, and unit elasticity of gate investment. Richer parameterisations (multiplicative scale, convex coordination cost, curvature on gate efficacy) leave the comparative statics below unchanged.

\paragraph*{Optimal allocation.}
Substituting Eq.~\eqref{eq:budget} into Eq.~\eqref{eq:Qt}, taking logs, and differentiating with respect to $\lambda_t$ gives the first-order condition
\begin{equation}
\frac{\partial \log Q_t}{\partial \lambda_t}
\;=\; -\psi_Z \;+\; \chi\,n_t\,\psi_A\,e^{-\psi_A \lambda_t} \;=\; 0,
\label{eq:foc}
\end{equation}
which solves to
\begin{equation}
\lambda_t^{\ast}
\;=\; \frac{1}{\psi_A}\,\log\!\left(\frac{\chi\,n_t\,\psi_A}{\psi_Z}\right).
\label{eq:lambda_star}
\end{equation}

Eq.~\eqref{eq:lambda_star} is the model's centrepiece. The optimal share of effort allocated to the block-specific human gate depends on exactly four parameters, each with a direct empirical interpretation:

\begin{enumerate}\itemsep2pt
\item[(i)] $\chi$, \emph{Fr\'{e}chet shape (effective temperature).} Higher $\chi$ tightens the candidate quality distribution and, mechanically through $\Gamma=\chi n_t e^{-a_t}$, scales the per-candidate coordination burden upward; both raise the gate share: $\partial\lambda_t^{\ast}/\partial\chi>0$. This $\chi$-scaled coordination burden cannot be managed through general oversight.

\item[(ii)] $n_t$, \emph{number of candidate outputs.} More candidates raise integration complexity, raising the gate share: $\partial\lambda_t^{\ast}/\partial n_t>0$.

\item[(iii)] $\psi_A$, \emph{gate productivity.} In the operating regime where gates are valuable ($\chi n_t \psi_A > e\,\psi_Z$), more productive gates substitute for raw effort and reduce the required share: $\partial\lambda_t^{\ast}/\partial\psi_A<0$.

\item[(iv)] $\psi_Z$, \emph{general-oversight productivity.} More productive general oversight pulls effort away from gates: $\partial\lambda_t^{\ast}/\partial\psi_Z<0$.
\end{enumerate}

\paragraph*{Full automation as a corner solution.}
Eq.~\eqref{eq:lambda_star} also tells us when block-specific gates should \emph{not} be deployed. The interior optimum $\lambda_t^{\ast}>0$ requires $\chi\,n_t\,\psi_A>\psi_Z$; whenever $\chi\,n_t\,\psi_A\le\psi_Z$, the first-order condition gives $\lambda_t^{\ast}\le 0$ and the constrained optimum is the corner $\lambda_t=0$, fully automated execution with no block-specific gate. This nests the operating regime of existing autonomous research systems\cite{lu2026ai_scientist,wu_autogen_2023,hong_metagpt_2023} as a special case: when blocks lie close to the LLM's training distribution (large $\theta_t$, low effective $\chi$), candidate counts are small, and gate productivity $\psi_A$ is low relative to general oversight $\psi_Z$, full automation is optimal. The framework therefore does not insist on human gates universally; it predicts the conditions under which they should be deployed and, equivalently, the conditions under which automation alone suffices.

A further empirical implication concerns the level rather than the slope of $Q_t$. Because $\theta_t$ enters $Q_t$ as a pure multiplicative pre-factor through $\widetilde{Q}_t$, it drops out of the FOC and the optimal share $\lambda_t^{\ast}$ in Eq.~\eqref{eq:lambda_star} is itself \emph{independent} of $\theta_t$. The reliability gain from imposing the architecture, however, is not. Treat a run as a failure when $Q_t$ falls below some threshold $\tau$; the failure rate at any allocation $\lambda_t$ is $\Pr(Q_t<\tau)$, which depends on $\theta_t$ non-linearly. When $\theta_t$ is small (out-of-distribution tasks), the unconstrained pipeline ($\lambda_t=0$) sits closer to or below $\tau$, so the gate-induced reduction in coordination burden $\Gamma$ lifts a larger mass of runs over the threshold. \emph{The model therefore predicts that the empirical gain from imposing decision architecture, measured as the failure-rate gap between configurations, will be largest where the data environment is most divergent from likely LLM training distributions}, the prediction we test next.

\section{Experimental design}

We test this prediction with a $2\times4$ factorial design crossing pipeline configuration (constrained HLER vs.\ unconstrained baseline) with four datasets that vary in scale, structure, and proximity to likely LLM training data. The datasets are: UK Biobank (UKB; ${\sim}500{,}000$ participants, biomedical and lifestyle data); China Health and Nutrition Survey (CHNS; ${\sim}30{,}000$ individuals, longitudinal household panel since 1989); China Health and Retirement Longitudinal Study (CHARLS; ${\sim}20{,}000$ adults aged 45+); and CMGPD-Liaoning, a historical demographic panel from Qing-dynasty population registers (${\sim}1749$--$1909$). The CMGPD panel deliberately stresses the system: its variable conventions and substantive concepts diverge sharply from anything LLMs are likely to have seen in training, the empirical analogue of low $\theta_t$ in the model.

The unconstrained baseline is not a strawman. It uses the same underlying language model (Claude Sonnet 4.6) and the same high-level agent decomposition for reasoning tasks shared with HLER. Prompts are held identical for the reasoning agents active in both arms. The arms differ by design in the execution and governance layer: data construction and estimation are delegated to LLM-generated code rather than to deterministic agents, and the gate-specific approval agents are omitted. In the unconstrained arm, gate prompts are not run silently and are not auto-approved; the orchestrator advances automatically from one stage to the next. This mirrors the operating principles of leading autonomous research systems\cite{lu2026ai_scientist,wu_autogen_2023,hong_metagpt_2023}, so the gap we document should be read as evidence about decision architecture, not as a verdict on the underlying model. Each run produces a complete research output. Three independent expert reviewers evaluate each output on feasibility (can the question be implemented given the data?), identification credibility (does the strategy meet standard criteria for causal inference?), and output consistency (are reported results internally consistent with the underlying estimation?). Inter-rater reliability is substantial (mean pairwise Cohen's $\kappa = 0.67$). Failures are classified into five primary modes (Table~\ref{tab:failure_modes}).

\begin{table}[t]
\centering
\small
\caption{Taxonomy of primary failure modes.}
\label{tab:failure_modes}
\begin{tabular}{p{4cm} p{10cm}}
\toprule
\textbf{Failure type} & \textbf{Description} \\
\midrule
Infeasible questions & Questions that cannot be implemented given the available data, including reliance on non-existent variables, insufficient within-sample variation, or incompatibility with the data structure (e.g., panel methods on cross-sectional data). \\
Data-processing and execution failures & Errors in data handling or computational execution: incorrect variable construction, sample-filtering mistakes, merge mismatches, or code that fails to run. \\
Identification failures & Empirical strategies that violate core causal-identification assumptions: implausible parallel trends, uncontrolled confounding, reverse causality, or weak/invalid instruments. \\
Hallucinated references and fabrications & Generation of plausible but unverifiable or incorrect factual content: non-existent or misattributed academic references, incorrect author--year combinations, or fabricated supporting evidence. \\
Interpretation inconsistencies & Mismatches between empirical results and their textual interpretation: discrepancies between regression outputs and reported coefficients, incorrect statements of statistical significance, or conclusions not supported by the underlying estimates. \\
\bottomrule
\end{tabular}
\end{table}

\subsection{Decision architecture sharply reduces failure}

Across 200 main-experiment runs, the constrained HLER pipeline outperformed the unconstrained baseline on every evaluation dimension (Table~\ref{tab:main_results}). Feasibility rose from 0.37 to 0.83, identification credibility from 0.31 to 0.65, and output consistency from 0.29 to 0.78 (all $p<0.001$, two-sided Fisher's exact test). The headline statistic is the overall failure rate: 72\% of unconstrained runs exhibited at least one critical failure, compared with 16\% under HLER.

It is worth noting that identification credibility remains the hardest dimension. Even under HLER, 35\% of constrained runs fail the identification criterion, compared with 69\% under the unconstrained baseline. We therefore interpret HLER as a failure-reduction and failure-containment architecture rather than as a system that guarantees reliable causal claims. The remaining identification failures typically arise when the available data cannot support the proposed causal contrast, when identifying assumptions such as parallel trends or exclusion restrictions are not credible, or when the research question is substantively interesting but underidentified by the available design.

\begin{table}[t]
\centering
\small
\caption{Main experiment: constrained vs.\ unconstrained pipeline (pooled across datasets).}
\label{tab:main_results}
\begin{tabular}{lccc}
\toprule
 & Constrained ($n=100$) & Unconstrained ($n=100$) & $p$-value \\
\midrule
Feasibility rate           & 0.83 & 0.37 & $<0.001$ \\
Identification credibility & 0.65 & 0.31 & $<0.001$ \\
Output consistency         & 0.78 & 0.29 & $<0.001$ \\
\midrule
Any failure                & 0.16 & 0.72 & $<0.001$ \\
\bottomrule
\end{tabular}
\par\medskip
\footnotesize
\textit{Notes:} Proportions of runs satisfying each criterion; $p$-values from two-sided Fisher's exact tests.
\end{table}

\subsection{Heterogeneity: gains are largest where training-distribution divergence is largest}

Table~\ref{tab:by_dataset} disaggregates failure rates by dataset. The constrained pipeline outperforms the unconstrained baseline in all four cases, but the gap varies across datasets. Because dataset-specific $\theta_t$ is not directly observed, we use literature prevalence as an exploratory proxy for dataset familiarity. Specifically, PubMed searches (as of June 8, 2026) for the dataset names returned 18,601 results for UK Biobank/UKB, 15,772 for the China Health and Nutrition Survey/CHNS, 5,175 for the China Health and Retirement Longitudinal Study/CHARLS, and 2 for China Multi-Generational Panel Dataset/CMGPD. This proxy is imperfect and likely overweights biomedical visibility, but it provides a transparent external measure of how frequently each dataset appears in public scientific discourse.

The largest reliability gap appears in CMGPD-Liaoning (0.16 vs.\ 0.88), the dataset with the lowest literature prevalence and the most historically specific variables and conventions. The smallest gap appears in CHARLS (0.20 vs.\ 0.64), whose standard panel structure is more amenable to automated analysis. UKB (0.12 vs.\ 0.64) and CHNS (0.16 vs.\ 0.72) lie between. This heterogeneity should therefore be interpreted as suggestive rather than as a direct test of $\theta_t$: the pattern is consistent with the model's prediction that the reliability dividend from decision architecture is larger when the data environment is less familiar to the LLM, but $\theta_t$ itself is proxied rather than observed.

\begin{table}[t]
\centering
\small
\caption{Any-failure rate by dataset and pipeline configuration.}
\label{tab:by_dataset}
\begin{tabular}{lccc}
\toprule
\textbf{Dataset} & \textbf{Constrained} ($n=25$) & \textbf{Unconstrained} ($n=25$) & $p$-value \\
\midrule
UK Biobank        & 0.12 & 0.64 & $<0.001$ \\
CHNS              & 0.16 & 0.72 & $<0.001$ \\
CHARLS            & 0.20 & 0.64 & $0.004$ \\
CMGPD-Liaoning    & 0.16 & 0.88 & $<0.001$ \\
\bottomrule
\end{tabular}
\par\medskip
\footnotesize
\textit{Notes:} Proportion of runs with at least one failure. $p$-values from two-sided Fisher's exact tests within each dataset.
\end{table}

The composition of failures was also informative (Table~\ref{tab:failure_breakdown}). Hallucinated references and fabrications were seven times more common in unconstrained runs than under HLER (21 vs.\ 3); interpretation inconsistencies, six times (18 vs.\ 3); infeasible questions and identification failures, between three- and four-fold. Data-processing failures were rare under both configurations (1 each), confirming that deterministic components are reliable when they are used.

\begin{table}[t]
\centering
\small
\caption{Primary failure modes among failed runs (main experiment, pooled).}
\label{tab:failure_breakdown}
\begin{tabular}{lcc}
\toprule
Failure type & Constrained & Unconstrained \\
\midrule
Infeasible questions                       & 4  & 17 \\
Data-processing and execution failures     & 1  & 1  \\
Identification failures                    & 5  & 15 \\
Hallucinated references and fabrications   & 3  & 21 \\
Interpretation inconsistencies             & 3  & 18 \\
\midrule
Total failures                             & 16 & 72 \\
\bottomrule
\end{tabular}
\par\medskip
\footnotesize
\textit{Notes:} Counts of failed runs assigned to a primary failure type.
\end{table}

\subsection{Deterministic computation and human gates contribute independently}

To isolate which architectural feature drives the reliability gain, we ran an 80-run ablation on CHNS and CHARLS crossing two binary factors, human gates (on/off) and deterministic data processing (on/off), with 10 trials per cell per dataset. CHNS and CHARLS were chosen because they are the most comparable pair, minimising dataset-driven heterogeneity.

The full HLER configuration achieved the lowest failure rate (0.20; Table~\ref{tab:ablation}). Removing human gates while retaining deterministic processing raised failures to 0.35; removing deterministic processing while retaining human gates raised them to 0.45. Removing both yielded 0.70, approaching the unconstrained baseline. Both features therefore appear to contribute independently. The increment from removing both jointly (0.50) exceeds the sum of removing each alone (0.15 + 0.25 = 0.40), a pattern consistent with complementarity between human gates and deterministic computation. However, because the ablation contains only 20 runs per condition, the interaction contrast is imprecise and should be interpreted as exploratory rather than confirmatory. The model rationalises why complementarity is plausible: gate investment $a_t$ lowers the coordination burden $\Gamma$, while the deterministic-probabilistic operator partition raises effective gate productivity $\psi_A$ on the blocks where the gate operates. Both inputs enter the optimal allocation jointly, so removing either may weaken the other's marginal contribution.

\begin{table}[t]
\centering
\small
\caption{Ablation study: failure rates by design feature (CHNS + CHARLS).}
\label{tab:ablation}
\begin{tabular}{lcc}
\toprule
 & Human gates & No human gates \\
\midrule
Deterministic data processing   & 0.20 ($n=20$) & 0.35 ($n=20$) \\
No deterministic data processing & 0.45 ($n=20$) & 0.70 ($n=20$) \\
\bottomrule
\end{tabular}
\par\medskip
\footnotesize
\textit{Notes:} Each cell reports the proportion of runs with any failure. Full HLER = deterministic data processing + human gates.
\end{table}

\subsection{Illustrative cases}

Aggregate statistics describe the size of the effect; mechanism is more visible in cases. We summarise two; full transcripts are in Appendix~C.

\paragraph*{Case 1: Escalating and rejecting a flawed identification strategy.}

In a CHNS run, the system proposed estimating the impact of China's New Rural Cooperative Medical Scheme on systemic inflammation using an event-study design comparing rural treated individuals to urban controls. The question was substantively plausible and the biomarker outcome was objective, so it passed the research-question gate.

At the identification stage, the deterministic event-study diagnostic revealed a clear violation of the parallel-trends assumption. Pre-treatment coefficients exhibited a large and statistically significant deviation ($t=-4$, $p<0.001$), and the joint pre-trend test rejected the identifying assumption. The important point is not that the LLM reviewers immediately recognised the violation with full force. They did not: early drafts understated the severity of the pre-trend failure and treated restricted-sample specifications as potential repairs. What changed across review iterations was the escalation of the issue from a reported diagnostic concern to a publication-blocking identification defect.

The violation could not be convincingly resolved. At the publication gate, the human PI declined to approve the study, concluding that the estimated effects could not be interpreted causally. This case therefore illustrates the function of HLER as a research harness: it does not require each LLM reviewer to be individually reliable on the first pass. Instead, deterministic diagnostics, repeated critique, auditable records, and a final human gate jointly prevent an unreliable causal claim from being advanced as publication-ready output. The model in Eq.~\eqref{eq:Qt} captures this through the gate investment $a_t=\psi_A\lambda_t$: a positive $\lambda_t$ raises $a_t$ and so reduces the coordination burden $\Gamma=\chi n_t e^{-a_t}$ that would otherwise push final output quality $Q_t$ below the publication-decision threshold $\tau$.

\paragraph*{Case 2: Methodological sophistication does not substitute for identification.}
A CHARLS analysis examined the effect of urban employee medical insurance (UEBMI) on outpatient healthcare utilisation among the elderly. The system implemented a double/debiased machine-learning (DDML) framework with individual fixed effects, using random forests to flexibly control for nonlinear confounding.

Despite the use of state-of-the-art methods, the identification strategy faced a fundamental challenge. Transitions into UEBMI coverage were closely tied to retirement, and retirement timing is itself endogenous to health shocks. As a result, the estimated positive association between insurance enrolment and outpatient spending could not be cleanly interpreted as causal: it remained observationally indistinguishable from adverse selection into coverage.

During the review process, the system identified these limitations and progressively revised the interpretation. Early drafts overstated the result as evidence of insurance-induced utilisation; later versions explicitly acknowledged the marginal statistical significance and the inability to rule out endogeneity. The final manuscript reframed the findings as within-individual associations rather than causal effects and substantially downgraded its policy claims.

This case highlights a critical limitation of automated and machine-learning-augmented approaches: methodological sophistication cannot compensate for weak identification. Even with flexible modelling and advanced estimation techniques, the absence of a credible source of exogenous variation constrains the interpretability of results. HLER addresses this not by guaranteeing valid identification, but by ensuring that such limitations are transparently identified and appropriately reflected in the final output.

%==============================================================
\section{Discussion}

Our findings document a structural mismatch between probabilistic language models and the epistemic requirements of empirical social science, and show that this mismatch can be reduced through decision architecture rather than through better models alone. The task-based production model rationalises this empirically: when LLM output quality is Fr\'{e}chet-distributed and the scale parameter $\theta_t$ falls with task--training-distribution divergence, human oversight becomes most valuable where the LLM is weakest. The empirical pattern is consistent with this implication, although not a direct measurement of $\theta_t$: the largest reliability dividend appears on the historical CMGPD panel, which has the lowest public literature prevalence and the most unfamiliar data conventions, whereas smaller gaps appear for more widely represented contemporary health datasets.

The central contribution is therefore not that HLER makes AI-assisted research fully reliable. It does not. Identification credibility remains the hardest dimension, and a substantial share of constrained runs still fail this criterion. The contribution is instead that HLER sharply reduces critical failures and changes where those failures occur in the workflow. Under the unconstrained baseline, weak designs, infeasible analyses, and inconsistent interpretations can be carried forward as publication-ready outputs. Under HLER, many of the same weaknesses are detected earlier, documented more explicitly, or stopped at a human gate. This distinction matters: reliability in AI-assisted science should be understood not only as the production of correct outputs, but also as the containment of incorrect ones.

\subsection{From model capability to research harnesses}

The dominant framing of LLMs in research positions reliability as a model-quality problem: better-aligned, better-grounded, less hallucinatory models will produce more trustworthy science. Our results suggest that this framing is incomplete. The same underlying model produces a 72\% failure rate under one allocation of cognitive labour and a 16\% failure rate under another. The variable that moves is not the model but the architecture surrounding it.

We therefore conceptualise HLER as a research harness rather than an autonomous AI scientist. A harness does not make failure impossible. Instead, it constrains where failure can occur, makes it observable, and prevents weak or inconsistent outputs from being advanced without human review. In this sense, the contribution of HLER is not absolute reliability but governed reliability: a structured environment in which LLM-generated reasoning is coupled to deterministic computation, explicit decision gates, and auditable research records. This reframes a substantial portion of the AI-in-science debate as a question of behaviourally informed system design, including decision sequencing, commitment devices, and the placement of human judgement, rather than purely an ML-engineering question. It also suggests a productive division of effort: model developers improve capabilities; methodologists, journals, and field-specific scholars design the research harnesses within which capabilities are responsibly used\cite{shneiderman_2020,amershi_guidelines_2019,mosqueira2023hitl,lazer_css_2020}.

The architectural account complements a recent distributional finding. Wang et al.\cite{wang_uzzi_2025_creativity} document that LLM outputs are tightly clustered while human outputs exhibit greater variance, with humans dominating the upper tail of divergent-creativity scores; parameter tuning (``temperature'') initially raises mean LLM performance but eventually collapses into incoherence, and creative-genius personas reduce performance further. The same Fr\'echet shape parameter $\chi$ that governs the tail of LLM output quality in our model rationalises this pattern: probabilistic flexibility lengthens the tail in both directions, so reliability gains require cutting off the bad tail rather than pushing the good one further. Where Wang et al.\ characterise the distributional gap, we identify a workflow-side lever: a research harness that samples broadly inside probabilistic blocks but binds downstream decisions through deterministic execution and human gates. The two findings are mutually reinforcing: parameter-level adjustments inside the model cannot substitute for structural intervention outside it.

\subsection{Human gates as architectural pre-registration}

The credibility movement has long advocated pre-registration, specification transparency, and the separation of exploratory and confirmatory analysis as voluntary commitments\cite{nosek_preregistration_2018,munafo_manifesto_2017,christensen_miguel_2018,simonsohn_spec_curve_2020}. Voluntary commitments are difficult to sustain. HLER's three human gates can be understood as the same commitments enforced architecturally: the PI cannot see estimation results before selecting a research question; cannot see final estimates before approving the identification strategy; and cannot publish without an explicit publication-decision step. In the model, this discipline operates through the allocation $\lambda_t$ in Eq.~\eqref{eq:lambda_star}: setting $\lambda_t=0$ leaves the coordination burden $\Gamma$ unmitigated and so admits low-quality runs, whereas a positive $\lambda_t$ binds attention to the gate before downstream estimates are visible. In practice, this makes bypassing the commitment require affirmative effort rather than virtue. The deeper point is that AI-assisted workflows offer a rare opportunity to embed methodological commitments in code, where they are enforced by default rather than relied upon as professional discipline.

\subsection{Behaviourally informed workflow design}

The architectural account should not be read as a direct behavioural measurement of human gatekeepers. We do not observe how long PIs spend at each gate, what information they attend to, or how different PIs would disagree over the same case. The behavioural contribution is instead at the level of workflow design. Three design features are drawn from behavioural science. First, gates are placed before downstream estimates are visible, mirroring the logic of pre-commitment devices: the workflow binds decisions to information available before the result is known. Second, the gates occur precisely where automation bias and over-reliance are likely to be most damaging: unfamiliar tasks where LLM outputs may appear fluent but lack reliable grounding. Third, the operator-type partition reduces process losses familiar from team-science research by assigning deterministic tasks to code and judgement-intensive tasks to human review. Thus, our claim is not that we identify the psychology of PI decisions, but that a workflow organised around behavioural-science principles can materially reduce failure in AI-assisted empirical research.

\subsection{Possible implications beyond HLER}

Although our experiment concerns LLM-assisted research, the deterministic-probabilistic distinction may also offer a useful conjecture for empirical research workflows more generally. Many human-led research failures have a similar structural form: judgement, interpretation, or motivated reasoning enters stages that require deterministic discipline, such as data construction, estimation, coefficient reporting, or version control. Conversely, some stages genuinely require probabilistic judgement, including question framing, interpretation, and assessment of scope conditions. We do not test this broader claim here. The experiment evaluates AI-assisted workflows, and the theoretical model is explicitly built around LLM-generated candidate outputs with Fr\'{e}chet-distributed quality. The extension to human-led research should therefore be read as a conceptual implication and a direction for future work, not as an empirical conclusion of the present study. Still, the architecture suggests a useful diagnostic question for research design: which workflow stages should be locked down by deterministic execution, and which should remain open to structured human judgement?

\subsection{Limitations}

Several limitations qualify these findings. First, the four datasets do not exhaust the diversity of empirical social science: experimental data, text-as-data, network data, and qualitative sources present challenges outside our evaluation. Second, results are obtained with a single underlying model (Claude Sonnet 4.6); other models will have different failure profiles, and the specific rates we report are likely model-dependent, though we expect the qualitative pattern to generalise. Third, the deterministic-probabilistic boundary is not always sharp in practice: stages such as variable selection contain both elements. Fourth, the ablation uses smaller cell sizes than the main experiment, with 20 runs per condition, and its results should be read as indicative of the relative importance of design features rather than as precise interaction estimates. In particular, the complementarity pattern is exploratory and not powered for a definitive interaction test. Fifth, our reliance on expert reviewers introduces unavoidable judgement, although this reflects how empirical research is evaluated in practice. Sixth, our metrics (feasibility, identification credibility, output consistency) are necessary but not sufficient conditions for replicable findings. Finally, the model is a stylised production environment: the Symmetric Execution Team assumption and the i.i.d.\ Fr\'{e}chet draw simplify a richer reality in which prompts, dependencies across blocks, and learning during a run all matter.

\subsection{Outlook}

The future of AI in empirical social science depends less on whether models can generate research-like outputs (they can) than on how human and machine cognition are jointly arranged. Properly harnessed, AI can extend researchers' cognitive reach and even improve human decisions by surfacing novel options the human would not have considered\cite{lazer_css_2020,shin_griffiths_2023_superhuman}; left unharnessed, it amplifies the failure modes the credibility movement has spent a decade containing\cite{simonsohn_spec_curve_2020,brodeur_mass_2020,osc_2015_reproducibility}. The architectural disciplines we identify, separating reasoning from computation, committing to questions and identification strategies before estimation, and gating decisions at epistemically decisive points, are continuous with the reforms the field has been pursuing for human-led research. Productive human--AI collaboration in science is, in the end, a problem of decision design: not how to remove humans from the loop, but how to place human attention where it can most effectively reduce, expose, and contain failure.

%==============================================================
\section{Methods}

\subsection{System implementation}

HLER is implemented as a modular multi-agent system in Python. An Orchestrator manages workflow execution and maintains a persistent RunState recording the active dataset, variable definitions, candidate questions, model specifications, and generated artefacts. Eight specialised agents (data audit, data profiling, question generation, data construction, identification assessment, econometric estimation, manuscript drafting, and review) handle distinct functions. Probabilistic agents call the LLM (Claude Sonnet 4.6 via API); deterministic agents execute R scripts. The system is model-agnostic at the agent interface. All LLM calls and code executions are logged.

\subsection{Datasets and access}

We use four publicly available datasets. (i) UK Biobank (UKB), a large-scale biomedical resource with ${\sim}500{,}000$ participants and linked health, genetic, and lifestyle data. (ii) China Health and Nutrition Survey (CHNS), a longitudinal household panel covering ${\sim}30{,}000$ individuals across repeated waves since 1989. (iii) China Health and Retirement Longitudinal Study (CHARLS), a nationally representative longitudinal survey of ${\sim}20{,}000$ Chinese adults aged 45+. (iv) CMGPD-Liaoning, a historical demographic panel from Qing-dynasty population registers (${\sim}1749$--$1909$). Datasets were chosen to span sample size, temporal scope, geographic context, and variable structure, providing a demanding test of generalisability.

\subsection{Experimental design and pre-specified experimental protocol}

The main experiment uses a $2\times4$ factorial design crossing pipeline configuration (constrained, unconstrained) with dataset, with 25 runs per cell (200 runs total). The ablation crosses two binary factors, human gates (on/off) and deterministic data processing (on/off), on CHNS and CHARLS, with 10 runs per cell per dataset (80 runs total). The full experiment comprises 280 runs. The design, run protocol, evaluation rubric, and primary hypotheses were pre-specified before data collection.

In the constrained configuration, LLMs are restricted to reasoning tasks; data processing uses deterministic R code; and human gates are active at question selection, identification review, and publication decision. In the unconstrained baseline, the same LLM backbone and the same prompts are used for reasoning agents active in both arms, but data construction and estimation are delegated to LLM-generated code, and gate-specific approval agents are omitted. The bypassed gate prompts are neither run silently nor automatically approved; rather, the orchestrator advances to the next stage without a gate-checking step. Each run produces a structured research record consisting of a research question, empirical strategy, estimation results, and draft manuscript when the pipeline advances to that stage. In the constrained HLER arm, the record may instead end with a documented human-gate decision not to approve publication when the PI identifies an unresolved feasibility, identification, or consistency problem.

\subsection{A note on the baseline}

The unconstrained configuration uses the same underlying language model and identical prompts on the reasoning agents shared with the constrained pipeline (hypothesis generation, identification critique, manuscript drafting, review). Gate-specific approval agents, however, are active only in HLER. They are removed in the unconstrained arm rather than run silently or auto-approved. The only differences are that (i) data construction and estimation use LLM-generated code rather than deterministic R agents, which necessarily changes the prompts on those two blocks, and (ii) human decision gates are absent, allowing autonomous progression from one stage to the next. This design mirrors the operating principles of leading autonomous research systems such as AI Scientist\cite{lu2026ai_scientist} and multi-agent frameworks such as AutoGen\cite{wu_autogen_2023} and MetaGPT\cite{hong_metagpt_2023}, which permit LLMs to operate across all pipeline stages without structured human oversight. The unconstrained baseline therefore approximates the performance achievable under current full-automation paradigms when applied to empirical social science. The substantial gap we document should be read as evidence about the value of architectural constraints, not about deficiencies in the underlying language model.

\subsection{Evaluation}

Outputs are independently assessed by three expert reviewers (PhD-level economists or quantitative social scientists, blinded to pipeline configuration) using a standardised rubric (Appendix~B). The same three reviewers evaluated all 280 runs, ensuring that differences across conditions were not driven by changes in reviewer composition. The rubric evaluates three dimensions: feasibility, identification credibility, and output consistency. Inter-rater agreement is summarised by Cohen's $\kappa$ (mean pairwise $\kappa = 0.67$ in the main experiment).

We distinguish between two concepts: output failure and failure containment. A run is coded as an output failure if the final approved or advanced research output contains at least one critical defect in feasibility, identification credibility, or output consistency. In the constrained HLER arm, a PI decision not to approve publication because a design flaw was detected is not coded as a pipeline failure; it is coded as a successful failure-containment event. This coding reflects the purpose of HLER: not to force every initiated run to produce a publishable claim, but to prevent unreliable claims from being advanced as publication-ready outputs.

Failures are classified into the five primary modes defined in Table~\ref{tab:failure_modes}. A single run may exhibit multiple modes; in Table~\ref{tab:failure_breakdown}, each failed run is assigned to its most consequential mode.

\subsection{Statistical analysis}

Primary comparisons use two-sided Fisher's exact tests on dichotomous failure indicators; proportions and exact $p$-values are reported. For pooled analyses across datasets we additionally fit logistic regressions with dataset fixed effects; estimated configuration effects are robust to this adjustment. The ablation is analysed by treating each of the four conditions as a level and reporting cell proportions. Because the ablation contains only 20 runs per condition, we treat interaction patterns as exploratory and do not interpret them as precise estimates of complementarity.

\bibliographystyle{unsrt}  
\bibliography{references}

\newpage
%==============================================================
\appendix
\section*{Appendix A: Symmetric execution and Fr\'{e}chet aggregation}

The main text states that the gross block output $\widetilde{Q}_t$ depends on $q$, $n_t$, and $\chi$ but not on the oversight allocation $\lambda_t$. This appendix gives the supporting assumption and derivation.

\begin{assumption}[Symmetric Execution Team]\label{ass:symmetric}
In equilibrium, the candidate outputs generated by an LLM agent for block $t$ are evenly spaced across the sub-tasks of the block and have common quality depth $q$, common Fr\'{e}chet shape $\chi$, and common generation cost $c$. Under this approximation, an agent generating $n_t$ candidates produces gross assignment value
\begin{equation}
\widetilde{Q}_t \;=\; q\,\chi^{\beta}\,n_t^{\alpha}\,c^{-\varphi},
\qquad
\alpha=\tfrac{1}{\chi},\;\;\beta>0,\;\;\varphi>0,
\label{eq:Qtilde_appendix}
\end{equation}
where $\alpha=1/\chi$ governs returns to candidate count and $\beta>0$ captures the curvature of $\widetilde{Q}_t$ in $\chi$. Eq.~\eqref{eq:Qt} in the main text uses the special case $c=1$ and absorbs $\chi^\beta$ into $q$, since $\widetilde{Q}_t$ does not enter the first-order condition for $\lambda_t$.
\end{assumption}

\begin{proposition}[Fr\'{e}chet aggregation to gross assignment value]\label{prop:aggregation}
Let each candidate output $o\in\mathcal{O}_t$ draw quality independently from $\mathrm{Fr\acute{e}chet}(\chi,\theta_t)$ as in Eq.~\eqref{eq:frechet}, and let generation cost $c_o=c$ be common across candidates. Then, under Assumption~\ref{ass:symmetric}, the expected maximum quality-adjusted output selected by the assignment rule $o_t(\tau)\in\arg\max_{o\in\mathcal{O}_t}\{z_o(\tau)\,c_o^{-\varphi}\}$, aggregated across sub-tasks $\tau\in t$, takes the reduced form
\begin{equation}
\mathbb{E}\!\left[\max_{o\in\mathcal{O}_t} z_o(\tau)\,c_o^{-\varphi}\right]
\;=\; \theta_t\cdot\chi^{\beta}\cdot n_t^{\alpha}\cdot c^{-\varphi}
\;\equiv\; \widetilde{Q}_t.
\label{eq:frechet_aggregation}
\end{equation}
\end{proposition}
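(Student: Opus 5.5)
The plan is to reduce the statement to the max-stability of the Fr\'{e}chet family, so that the only real content is a one-dimensional computation of $\mathbb{E}[\max]$. I would fix the parametrisation $\Pr(z_o(\tau)\le z)=\exp\{-(z/\theta_t)^{-\chi}\}$ for $z>0$, with $\theta_t$ a scale parameter, which is the reading under which $\theta_t$ enters multiplicatively as claimed. Because $c_o=c$ is common across candidates, the factor $c^{-\varphi}$ is a positive constant. It can be pulled out of both the $\arg\max$ and the expectation, so the assignment rule $o_t(\tau)$ simply picks the highest-quality candidate. This leaves $\mathbb{E}[\max_{o\in\mathcal{O}_t} z_o(\tau)]$ to compute.

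\textbf{Step 1 (max-stability).} Using independence across the $n_t=|\mathcal{O}_t|$ draws,
\[
\Pr\Bigl(\max_{o\in\mathcal{O}_t} z_o(\tau)\le z\Bigr)=\exp\{-n_t(z/\theta_t)^{-\chi}\}=\exp\{-(z/(\theta_t n_t^{1/\chi}))^{-\chi}\}.
\]
So the maximum is again Fr\'{e}chet with shape $\chi$ and scale $\theta_t n_t^{1/\chi}$. This is exactly where the exponent $\alpha=1/\chi$ comes from.

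\textbf{Step 2 (expectation).} For $\chi>1$ I would use the standard Fr\'{e}chet mean,
\[
\mathbb{E}\Bigl[\max_{o} z_o(\tau)\Bigr]=\theta_t\,n_t^{1/\chi}\,\Gamma(1-1/\chi).
\]
Multiplying back by $c^{-\varphi}$ then gives $\theta_t\,n_t^{\alpha}\,c^{-\varphi}\,\Gamma(1-1/\chi)$. I would state the restriction $\chi>1$ explicitly, since the expectation is infinite otherwise. The proposition as written is silent on this, so I would add it as a maintained condition.

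\textbf{Step 3 (aggregation across sub-tasks).} Assumption~\ref{ass:symmetric} says that candidates are spread evenly over the sub-tasks $\tau\in t$ and share common $(q,\chi,c)$. Hence the expression from Step~2 is the same for every $\tau$, and averaging over $\tau$ leaves it unchanged. I would also note that $\theta_t$ plays the role of the depth parameter $q$ in Eq.~\eqref{eq:Qtilde_appendix}.

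\textbf{Main obstacle.} The hard step is matching the $\chi$-dependent constant $\Gamma(1-1/\chi)$ to the factor $\chi^{\beta}$ with $\beta>0$. This is not an exact identity: $\Gamma(1-1/\chi)$ is in fact decreasing in $\chi$ on $(1,\infty)$. I would try two routes. The first is to read $\chi^{\beta}$ as a reduced-form stand-in that the Symmetric Execution Team assumption absorbs, that is, define $\chi^{\beta}$ to equal $\Gamma(1-1/\chi)$ times whatever $\chi$-dependence the even-spacing aggregation contributes. The second is to show that the even spacing of candidates across sub-tasks contributes an extra $\chi$-factor, for instance through the effective number of candidates per sub-task, and that this factor dominates. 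If neither works cleanly, I would present the result as an equality up to a $\chi$-dependent constant. This weaker form is harmless for the paper's purposes, because $\widetilde{Q}_t$ does not enter the first-order condition for $\lambda_t$.
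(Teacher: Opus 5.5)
Your Steps 1--3 are the paper's proof: max-stability gives scale $n_t^{1/\chi}\theta_t$, the mean of a $\mathrm{Fr\acute{e}chet}(\chi,s)$ law is $s\,\Gamma(1-1/\chi)$ for $\chi>1$, and the common $c^{-\varphi}$ factors out. The paper handles your ``main obstacle'' by your first route: its Step~4 is a declared reduced-form parameterisation that sets $q\equiv\theta_t\Gamma(1-1/\chi)$ and simply posits a factor $\chi^{\beta}$ with $\beta>0$, with no aggregation argument of the kind your second route looks for. Your diagnosis is correct and the paper does not get around it. Because $\Gamma(1-1/\chi)$ is decreasing on $(1,\infty)$, the displayed equality holds only by definition: the paper's own Step~3 equals $q\,n_t^{\alpha}c^{-\varphi}$, so both the appended $\chi^{\beta}$ and the $\theta_t$-versus-$q$ prefactor in the statement are stipulated rather than derived. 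This is harmless only because $\widetilde{Q}_t$ drops out of the first-order condition for $\lambda_t$.
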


\begin{proof}
\textit{Step 1: Max-stability of the Fr\'{e}chet distribution.} If $q(o_{i,t})\overset{iid}{\sim}\mathrm{Fr\acute{e}chet}(\chi,\theta_t)$, the maximum of $n_t$ independent draws satisfies
\begin{equation}
\max_{i=1,\ldots,n_t} q(o_{i,t}) \;\sim\; \mathrm{Fr\acute{e}chet}\!\left(\chi,\;n_t^{1/\chi}\,\theta_t\right),
\end{equation}
by the max-stability property of the Fr\'{e}chet family.

\textit{Step 2: Expected value of the maximum.} The mean of $\mathrm{Fr\acute{e}chet}(\chi,s)$ is $s\,\Gamma(1-1/\chi)$ for $\chi>1$. Hence
\begin{equation}
\mathbb{E}\!\left[\max_i q(o_{i,t})\right] \;=\; n_t^{1/\chi}\,\theta_t\,\Gamma\!\left(1-\tfrac{1}{\chi}\right).
\end{equation}

\textit{Step 3: Incorporating cost.} With common cost $c_o=c$,
\begin{equation}
\mathbb{E}\!\left[\max_o z_o(\tau)\,c_o^{-\varphi}\right] \;=\; c^{-\varphi}\,n_t^{1/\chi}\,\theta_t\,\Gamma\!\left(1-\tfrac{1}{\chi}\right).
\end{equation}

\textit{Step 4: Reduced-form parameterisation.} Setting $\alpha=1/\chi$, $q\equiv\theta_t\,\Gamma(1-1/\chi)$, and $\beta>0$ to capture the curvature of $\widetilde{Q}_t$ in the Fr\'{e}chet shape $\chi$,
\begin{equation}
\mathbb{E}\!\left[\max_o z_o(\tau)\,c_o^{-\varphi}\right] \;=\; q\,\chi^{\beta}\,n_t^{\alpha}\,c^{-\varphi} \;\equiv\; \widetilde{Q}_t. \qquad\square
\end{equation}
\end{proof}

\noindent\textit{Remarks.} (i) Quality depth $q=\theta_t\,\Gamma(1-1/\chi)$ combines the Fr\'{e}chet scale and shape, so $\theta_t$ enters the model only through $q$. (ii) Setting cost $c=1$ and absorbing $\chi^\beta$ into $q$ yields the simplified form $\widetilde{Q}_t = q\,n_t^{1/\chi}$ used in the main text; because $\widetilde{Q}_t$ is independent of $\lambda_t$, this does not affect Eq.~\eqref{eq:lambda_star}. (iii) Replacing the linear $\Gamma=\chi n_t e^{-a_t}$ in Eq.~\eqref{eq:Qt} with the more general $\Gamma=\kappa\,\chi\,n_t^{1+\eta}\,e^{-\phi a_t}$ for $\kappa,\eta,\phi>0$ leaves the comparative statics in Eq.~\eqref{eq:lambda_star} qualitatively unchanged: it scales the closed-form solution by constants but does not alter the signs of $\partial\lambda_t^{\ast}/\partial\chi$, $\partial\lambda_t^{\ast}/\partial n_t$, $\partial\lambda_t^{\ast}/\partial\psi_A$, or $\partial\lambda_t^{\ast}/\partial\psi_Z$.

%==============================================================
\section*{Appendix B: Standardised evaluation rubric}
\label{app:rubric}

Each generated output is evaluated independently by three expert reviewers along three dimensions: feasibility, identification credibility, and output consistency. Reviewers are instructed to apply the following criteria.

\subsection*{B.1 Feasibility}

Feasibility assesses whether the proposed research question and empirical strategy can be implemented given the available dataset.

\noindent An output is classified as \emph{feasible} if:
\begin{itemize}\itemsep0pt
\item All required variables are present in the dataset;
\item The proposed design is compatible with the data structure (e.g., panel methods applied to panel data);
\item The sample size and variation are sufficient to support the analysis.
\end{itemize}

\noindent An output is classified as \emph{infeasible} if any of the following apply:
\begin{itemize}\itemsep0pt
\item Required variables are missing or incorrectly defined;
\item The empirical design is incompatible with the data (e.g., difference-in-differences without a valid comparison group);
\item The analysis cannot be executed in practice.
\end{itemize}

\subsection*{B.2 Identification credibility}

Identification credibility evaluates whether the empirical strategy satisfies standard criteria for causal inference.

\noindent An output is classified as \emph{credible} if:
\begin{itemize}\itemsep0pt
\item The identification strategy is clearly specified and justified;
\item Key assumptions (e.g., parallel trends, exogeneity, exclusion restriction) are plausible in the given context;
\item Potential confounders and alternative explanations are appropriately addressed.
\end{itemize}

\noindent An output is classified as \emph{not credible} if:
\begin{itemize}\itemsep0pt
\item The identification strategy is missing, unclear, or inappropriate;
\item Core assumptions are violated or unsupported;
\item The design relies on correlations that cannot be interpreted causally.
\end{itemize}

\subsection*{B.3 Output consistency}

Output consistency assesses whether the reported results are internally coherent.

\noindent An output is classified as \emph{consistent} if:
\begin{itemize}\itemsep0pt
\item Reported coefficients match the underlying estimation results;
\item Statistical significance is correctly interpreted;
\item The narrative accurately reflects the empirical findings;
\item Cited studies, factual claims, and references are verifiable and appropriately represented.
\end{itemize}

\noindent An output is classified as \emph{inconsistent} if:
\begin{itemize}\itemsep0pt
\item There are discrepancies between tables, coefficients, and text;
\item The interpretation contradicts the reported estimates;
\item Conclusions extend beyond what is supported by the results;
\item References are fabricated, misattributed, or unverifiable;
\item Factual claims are unsupported, invented, or inconsistent with the underlying analysis.
\end{itemize}

\subsection*{B.4 Evaluation procedure}

As shown in Figure~\ref{fig:eval_framework}, each generated research output is independently evaluated by three expert reviewers along three dimensions: feasibility, identification credibility, and output consistency. Each reviewer assigns a binary judgement (pass/fail) for each dimension. Final classification on each dimension is determined by majority rule (at least 2 of 3 reviewers). Inter-rater reliability is assessed using mean pairwise Cohen's $\kappa$.

For outputs that fail on one or more dimensions, reviewers additionally assign a primary failure mode using the five-category taxonomy in Table~\ref{tab:failure_modes}. The primary failure mode is intended to diagnose the dominant source of failure rather than to provide an exhaustive list of all defects in a given output.

\begin{figure}[h!]
\centering
\includegraphics[width=0.85\linewidth]{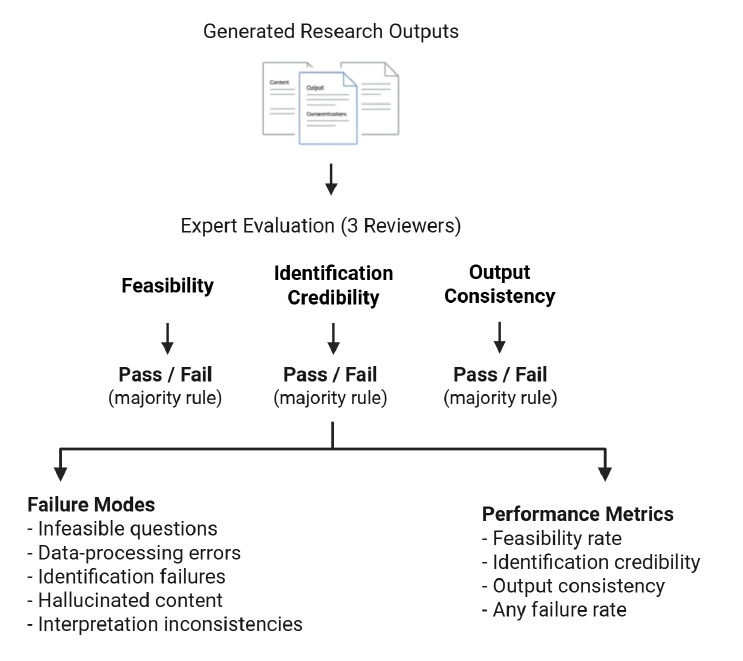}
\caption{\textbf{Evaluation framework for HLER outputs.} Each generated output is independently graded by three reviewers on feasibility, identification credibility, and output consistency. Each dimension is decided by majority rule. Failed outputs are assigned a primary failure mode from the five-category taxonomy; passed outputs contribute to the aggregate performance metrics.}
\label{fig:eval_framework}
\end{figure}

%==============================================================
\section*{Appendix C: Illustrative case studies}
\label{app:cases}

\subsection*{C.1 Case 1: Escalating and rejecting a flawed identification strategy}

We first examine a case evaluating the impact of China's New Rural Cooperative Medical Scheme (NRCMS) on systemic inflammation, measured by high-sensitivity C-reactive protein, using data from the China Health and Nutrition Survey (CHNS). The system proposed a difference-in-differences event-study design comparing rural residents (treated) to urban residents (controls) around the 2006 policy rollout. The question was retained at the selection stage due to its substantive relevance and the use of an objective biomarker outcome.

During the identification stage, the deterministic event-study diagnostic revealed a clear violation of the parallel-trends assumption. Pre-treatment coefficients exhibited a large and statistically significant deviation (e.g., $t=-4$, $p<0.001$), and the joint pre-trend test rejected the identifying assumption. The case is therefore not best read as evidence that LLM reviewers immediately detected an obvious problem. Rather, it shows how the harness prevents an obvious diagnostic from being normalised or rationalised away. Early drafts understated the severity of the pre-trend failure and treated restricted-sample specifications as possible repairs; later review iterations escalated the issue and made the identification failure explicit.

The violation could not be convincingly resolved. At the publication decision stage, the human PI declined to approve the study, concluding that the estimated effects could not be interpreted causally. Accordingly, this run is treated as a successful containment case rather than as a failed HLER output. This case illustrates a central function of HLER as a research harness: it does not assume that LLM-based reviewers are fully reliable on the first pass, but uses deterministic diagnostics, repeated critique, auditable records, and a final human gate to prevent the propagation of spurious causal claims.

\subsection*{C.2 Case 2: Methodological sophistication does not substitute for identification}

We next consider a case employing a more advanced estimation strategy, examining the effect of urban employee medical insurance (UEBMI) on outpatient healthcare utilisation among the elderly using CHARLS data. The system implemented a double/debiased machine learning (DDML) framework with individual fixed effects, using random forests to flexibly control for nonlinear confounding.

Despite the use of state-of-the-art methods, the identification strategy faced a fundamental challenge. Transitions into UEBMI coverage were closely tied to retirement, and retirement timing is itself endogenous to health shocks. As a result, the estimated positive association between insurance enrolment and outpatient spending could not be cleanly interpreted as causal, as it remained observationally indistinguishable from adverse selection into coverage.

During the review process, the system identified these limitations and progressively revised the interpretation. Early drafts overstated the result as evidence of insurance-induced utilisation, whereas later versions explicitly acknowledged the marginal statistical significance and the inability to rule out endogeneity. The final manuscript reframed the findings as within-individual associations rather than causal effects and substantially downgraded its policy claims.

This case highlights a critical limitation of automated and machine-learning--augmented approaches: methodological sophistication cannot compensate for weak identification. Even with flexible modelling and advanced estimation techniques, the absence of a credible source of exogenous variation constrains the interpretability of results. HLER addresses this not by guaranteeing valid identification, but by ensuring that such limitations are transparently identified and appropriately reflected in the final output.

\begin{table}[h]
\centering
\small
\caption{Primary failure modes nested within the three evaluation dimensions.}
\label{tab:failure_modes_appendix}
\begin{tabular}{p{4.2cm} p{10cm}}
\toprule
\textbf{Failure type} & \textbf{Description} \\
\midrule
Infeasible questions &
Research questions that cannot be implemented given the available data, including reliance on non-existent variables, insufficient within-sample variation, or incompatibility with the data structure (e.g., attempting panel methods on cross-sectional data). \\[2pt]
Data-processing and execution failures &
Errors in data handling or computational execution, such as incorrect variable construction, sample-filtering mistakes, merge mismatches, or code that fails to run as intended. These failures are rare but represent breakdowns in deterministic pipeline components. \\[2pt]
Identification failures &
Empirical strategies that violate core causal-identification assumptions, including implausible parallel trends, uncontrolled confounding, reverse causality, or weak and invalid instruments. \\[2pt]
Hallucinated references and fabrications &
Generation of plausible but unverifiable or incorrect factual content, including non-existent or misattributed academic references, incorrect author--year combinations, or fabricated supporting evidence. \\[2pt]
Interpretation inconsistencies &
Mismatches between empirical results and their textual interpretation, including discrepancies between regression outputs and reported coefficients, incorrect statements of statistical significance, or conclusions not supported by the underlying estimates. \\
\bottomrule
\end{tabular}
\end{table}
\clearpage
\newpage
%==============================================================
\section*{Data and Code Availability}

The GitHub repository at \url{https://github.com/maxwell2732/hler-working-papers} contains a curated selection of HLER-generated working papers together with the R scripts used to construct samples and produce estimates. Evaluation rubrics, example run logs, and the pre-specified document are included in Supplementary Information. All datasets used are publicly available: UK Biobank (\url{https://www.ukbiobank.ac.uk}), CHNS (\url{https://www.cpc.unc.edu/projects/china}), CHARLS (\url{https://charls.pku.edu.cn}), and CMGPD-Liaoning (available through the Lee-Campbell group).

\subsection*{Computational cost and reproducibility}

Each run requires approximately 20--25 minutes and \$0.8--\$1.5 in API costs. The HLER deterministic agents emit the executable R scripts used for sample construction and estimation, ensuring that every HLER-generated study is independently rerunnable. Code, scripts, prompts, evaluation rubrics, and example run logs are released publicly (see Data and Code Availability).

\subsection*{Use of AI tools in manuscript preparation}

Claude (Anthropic, Claude Opus 4.6) assisted with manuscript condensing, editing, and formatting. All scientific content, design choices, analyses, and conclusions are the authors' own.

\end{document}